\newcommand{\prob}{p}
\newcommand{\data}{\mathtt{data}}
\newcommand{\prior}{\mathtt{prior}}
\newcommand{\KL}{\mathtt{KL}}
\theoremstyle{plain}
\newtheorem{theorem}{Theorem}[section]
\theoremstyle{definition}
\theoremstyle{remark}
\icmltitlerunning{Prior-Informed Preference Alignment }
\begin{document}

\twocolumn[
\icmltitle{
\raisebox{-.5em}{\includegraphics[width=0.05\textwidth]{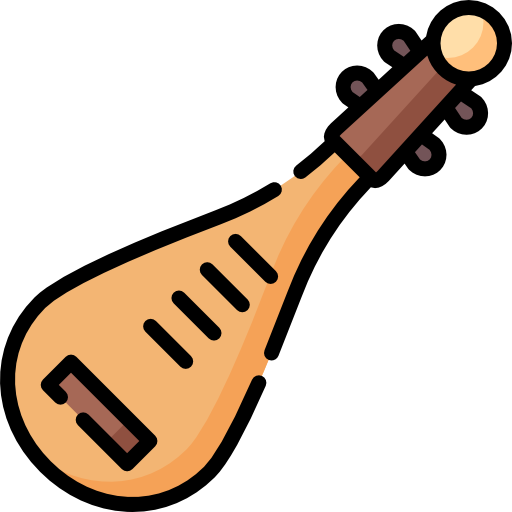}} PIPA: 
Preference Alignment as Prior-Informed Statistical Estimation
}




\begin{icmlauthorlist}
\icmlauthor{Junbo Li}{ut}
\icmlauthor{Zhangyang Wang}{ut}
\icmlauthor{Qiang Liu}{ut}
\end{icmlauthorlist}

\icmlaffiliation{ut}{The University of Texas at Austin, US}

\icmlcorrespondingauthor{Qiang Liu}{lqiang@cs.utexas.edu}

\icmlkeywords{LLM alignment, LLM reasoning, prior}

\vskip 0.3in
]



\printAffiliationsAndNotice{}  

\begin{abstract}
    Offline preference alignment for language models such as Direct Preference Optimization (DPO) is favored for its effectiveness and simplicity, eliminating the need for costly reinforcement learning. Various offline algorithms have been developed for different data settings, yet they lack a unified understanding. 
    In this study, we introduce Pior-Informed Preference Alignment (PIPA), a unified, RL-free probabilistic framework that formulates language model preference alignment as a Maximum Likelihood Estimation (MLE) problem with prior constraints. This method effectively accommodates both paired and unpaired data, as well as answer and step-level annotations. We illustrate that DPO and KTO are special cases with different prior constraints within our framework. By integrating different types of prior information, we developed two variations of PIPA: PIPA-M and PIPA-N. Both algorithms demonstrate a $3\sim10\%$ performance enhancement on the GSM8K and MATH benchmarks across all configurations, achieving these gains without additional training or computational costs compared to existing algorithms.
\end{abstract}

\section{Introduction}
Pre-training large language models (LLMs) from scratch on trillions of text tokens allows for accurate prediction next tokens in natural language \cite{achiam2023gpt,dubey2024llama,liu2024deepseek}. Following this, alignment, achieved through fine-tuning on smaller, high-quality datasets designed for specific tasks, becomes critical for enabling the model to develop specialized skills, such as engaging in conversation \cite{ouyang2022training}, math reasoning \cite{shao2024deepseekmath,yang2024qwen2}, coding \cite{deepseekcoder-v2}, web agent \cite{qin2025ui}, and more. 
The fundamental approach to alignment involves supervised fine-tuning (SFT) on the target domain, which essentially maximizes the likelihood of predicting the next token. 
However, numerous empirical studies have shown that simple SFT on preferred samples is inadequate for attaining optimal performance \cite{shao2024deepseekmath, ouyang2022training}.

Moving beyond basic imitation learning in SFT, it is suggested to learn from both positive and negative samples. Sample quality can be measured by training reward models to capture general preferences \cite{dong2024rlhf} or leveraging accurate rule-based rewards \cite{guo2025deepseek} for specific tasks like math and coding. By treating the autoregressive generation of LLMs as a Markov decision process (MDP), traditional reinforcement learning (RL) algorithms can be effectively applied, such as PPO \cite{ouyang2022training}, SAC \cite{sac}, REINFORCE \cite{ahmadian2024back}, etc. 

While online RL-based methods deliver strong performance, they face challenges such as high training costs, instability, and the need for a strong base model as the initial policy. As a result, offline algorithms like direct preference optimization (DPO) \cite{rafailov2024direct} are often preferred, thanks to their effectiveness and simplicity, particularly when high-quality datasets are accessible. 
The original DPO algorithm has several limitations. It relies on paired preference data, which is not essential for tasks with ground truth such as math and coding. Additionally, it is unable to accommodate step-level annotations. Furthermore, it treats all tokens equally, lacking token-level credit assignment.
To address these issues, a series of approaches have been developed, such as Kahneman-Tversky Optimization (KTO) \cite{ethayarajh2024kto} for unpaired data, Step-DPO \cite{Step-dpo,stepcontroldpo} and Step-KTO \cite{stepkto} for step-level annotations, and RTO \cite{rto}, TDPO \cite{zeng2024token}, and OREO \cite{oreo} for fine-grained token-level DPO. However, these methods are designed from specific perspectives, each addressing only particular challenges, and they lack a unified understanding to integrate their solutions.

In this work, we introduce a unified framework designed to address all the aforementioned challenges in offline approaches. 
Rather than framing the alignment problem within offline RL, we reformulate it as a maximum likelihood estimation (MLE) problem with prior constraints, operating within a purely probabilistic framework called Prior-Informed Preference Alignment (PIPA). 
From a statistical estimation perspective, we analyze the suboptimality of supervised fine-tuning (SFT). We demonstrate that both the original DPO and KTO algorithms can be interpreted as special cases within our framework, differing in the prior information they incorporate and the loss used. Building on the PIPA framework, we propose two variants,  PIPA-M and PIPA-N,  that incorporate prior information in different fashions. 
The probabilistic formulation naturally accommodates unpaired data and extends to step-level annotations. 
Our PIPA functions as a versatile plug-in loss design that seamlessly integrates with any (iterative) data generation pipeline in existing alignment framework. 
Furthermore, we show that PIPA training effectively learns token-level credit assignment, yielding precise per-token value estimations that may enable search during test-time inference.

Our contributions can be summarized as follows:  

$\bullet $ We formulate preference alignment as a prior-informed conditional probability estimation problem that is RL-free and provides clear theoretical insight. 

$\bullet $ Our approach does not need paired preference data, and seamlessly unifies both answer-wise and step-wise settings under a single, theoretically grounded framework.

 $\bullet$ Compared to existing approaches such as DPO \cite{rafailov2024direct} and KTO 
    \cite{ethayarajh2024kto}, our algorithm achieves improved performance without additional computational overhead.

\subsection{Related Work}
\paragraph{Learning from preference data}
RL has become a key framework for leveraging preference data for LLM alignment, with early methods like PPO \cite{schulman2017proximal}, which first trains a reward model on pairwise human feedback \cite{ouyang2022training}. Due to PPO’s high training cost, direct policy optimization methods without online RL have been explored, integrating policy and reward learning into a single stage. Notable works include DPO \cite{rafailov2024direct}, SLiC \cite{zhao2023slic}, IPO \cite{IPO}, GPO \cite{tang2024generalized}, and SimPO \cite{meng2024simpo}.
For fine-grained token-level optimization, DPO variants like TDPO \cite{zeng2024token}, TIS-DPO \cite{liu2024tis}, RTO \cite{rto}, and OREO \cite{oreo} have been introduced. To address step-level annotation inspired by PRM \cite{lightman2023let}, methods such as Step-DPO \cite{Step-dpo}, SCDPO \cite{stepcontroldpo}, and SVPO \cite{svpo} have emerged. To relax pairwise data constraints, particularly for tasks with ground truth like math and coding, KTO \cite{ethayarajh2024kto}, Step-KTO  \cite{stepkto}, and OREO \cite{oreo} have been proposed. 
Our PIPA framework addresses all these challenges within a unified paradigm, demonstrating that existing algorithms like DPO and KTO can be interpreted as special cases within our approach.

\paragraph{Probabilistic alignment}
In addition to reward maximization, some research approaches alignment from a probabilistic perspective. \cite{probabilistic} decompose label likelihood into a target distribution and a hidden distribution, solving it using the EM algorithm. Other works leverage importance sampling to train a policy parameterized by an energy-based model that aligns with the target distribution including DPG \cite{dpg}, GDC \cite{gdc}, GDC++ \cite{gdc++}, BRAIn \cite{pandey2024brain}.
\cite{dumoulin2023density} uses a density estimation formulation and recovers DPO.
Unlike these methods, our PIPA framework maximizes label likelihood while enforcing prior constraints by transforming it into the target distribution using Bayes' Theorem. We directly learn the distributions without relying on complex sampling and estimation procedures. PIPA is scalable, incurs no additional training cost, and remains flexible across any preference data.

\section{
PIPA 
\includegraphics[width=.03\textwidth]{figs/pipa5.png} 
} 
We introduce the prior-informed preference alignment (PIPA) framework in Section \ref{sec:pria}, followed by the first version, PIPA-M, in Section \ref{sec:priam}. Next, we explore its connection to DPO \cite{rafailov2024direct} and KTO \cite{ethayarajh2024kto} in Section \ref{sec:dpo}. Drawing inspiration from DPO and KTO, we develop the second version, PIPA-N, detailed in Section \ref{sec:prian}. In Section \ref{sec:step}, we extend PIPA-M and PIPA-N naturally to incorporate step-level annotations. Finally, Section \ref{sec:practical} presents the refined algorithms for PIPA-M and PIPA-N and compares them with prior methods like KTO.

\subsection{Prior-Informed Preference Alignment}
\label{sec:pria}
\paragraph{Problem}
We  define the preference alignment problem as probabilistic estimation. 
Assume that we are given a preference dataset 
$$\{x^i, y^i, c^i\}_{i=1}^N \sim \prob^{\data},
$$  
where $x^i$ is the instruction input, $y^i$ is the answer, and $c^i\in\{0,1\}$ represents the preference or correctness of the answer. We are interested in predicting $y$ given $x$ in the correct case $(c=1)$. This amounts to estimating conditional probability:
$$
\prob (y\mid x,c=1).$$
The canonical approach 
for estimating $\prob(y\mid x,c=1)$ is, of course, 
the maximum likelihood estimation (MLE), 
which yields supervised finetuning (SFT) 
on the positive examples with $c = 1$. 

However, SFT only uses the positive samples, rendering the negative samples with $c=0$ unusable. 
Preference alignment methods, on the other hand, aims to use both positive and negative data to get better estimation.  
But how is this possible while adhering to statistical principles, given that MLE is statistically optimal and $\prob(y \mid x, c=1)$, by definition, involves only the positive data ($c=1$)?

The idea is that the estimation should incorporate important prior information involving the negative data, thereby introducing a ``\textbf{coupling}'' between the estimations of the positive and negative probabilities, $\prob(y \mid x, c=1)$ and $\prob(y \mid x, c=0)$. Generally, this prior-informed estimation can be formulated as a constrained optimization problem that minimizes a loss on data fitness subject to a prior information constraint. In this work, we always set the data loss to be the log-likelihood of the label $c$:
\begin{align}\label{equ:priorframe} 
\textcolor{blue}{
\max_\theta  \mathbb{E}_{p^{\data}}\log p^{\theta}(c\mid x,y) ~~~~s.t.~~~ p^\theta \in \mathtt{PriorInfo}.
} 
\end{align}
By assuming different prior information, we can derive various algorithms in a principled manner, with a transparent understanding of the underlying priors and preferences. 
The problem formulation in \eqref{equ:priorframe} naturally does MLE for each sample $(x,y,c)$ without the need of paired data as in DPO \cite{rafailov2024direct}. If only $N$ pairs are available, we can decouple them into $2N$ samples as in KTO \cite{ethayarajh2024kto}.

\begin{algorithm*}[h!]
    \caption{PIPA: Prior-Informed Preference Alignment}
    \label{alg:PIPA}
    \begin{algorithmic}[1]
        \STATE \textbf{Input:} A dataset \(\{(x,y,c)\}\) of questions $x$, answers $y$, and preference $c$, with \(c \in \{0,1\}^T\); 
        a fixed prior model \(\prob^\prior\), and trainable models \(f^{\theta}_{}\) and \(g^{\theta}\)
        initialized from an SFT model \(f_0\). Choice: either PIPA-M or PIPA-N.

        \FOR{each batch}
        \begin{minipage}[t]{0.44\linewidth}
            \STATE \textbf{PIPA-M:}
             Compute 
            \[
               F^{\theta}(x,y_{\le t}) \;:=\; 
               \frac{f^{\theta}_{}\bigl(y_t \mid x, y_{<t}\bigr)\,g^{\theta}(x,y_{<t})}
                    {p^{\prior}\bigl(y_t \mid x, y_{<t}\bigr)}.
            \]
        \end{minipage}%
        \hspace{0.8em}
        \vrule width 1pt 
        \hspace{0.8em}
        \begin{minipage}[t]{0.44\linewidth}
            \textbf{PIPA-N:} 
            Compute 
            \[
               F^{\theta}(x,y_{\le t}) \;:=\; 
               \tau\!\Biggl(\!
                   \frac{f^{\theta}_{}\bigl(y_t \mid x, y_{<t}\bigr)\,\;g^{\theta}(x,y_{<t})}
                        {p^{\prior}\bigl(y_t \mid x, y_{<t}\bigr)\,\bigl(1 - g^{\theta}(x,y_{<t})\bigr)}
               \!\Biggr),
            \]
            where \(\tau(x) := \frac{x}{x + 1}\).
        \end{minipage}

        \STATE Minimize the loss:
        \begin{align}
            L(x,y,c) \;=\; 
              -\sum_{t : c_t=1}\!\log\,F^{\theta}(x,y_{\le t})
              \;-\; \sum_{t : c_t=0}\!\log\bigl(1 - F^{\theta}(x,y_{\le t})\bigr).
            \nonumber
        \end{align}
        \ENDFOR
    \end{algorithmic}
\end{algorithm*}

\subsection{PIPA-M: Enforcing Prior on Marginal Distribution}
\label{sec:priam}
We first consider a  straightforward case 
when we know that the marginal prediction $\prob(y\mid x)$ should match a prior distribution $\prob^{\prior}(y\mid x)$ defined by the pretrained LLM.
Because the marginal distribution is a sum of the positive and negative probabilities, that is, 
\begin{align*}
p(y\mid x) = p(y\mid  x,c=1)p(1| x) 
 + p(y\mid x,c=0)p(0 |  x),
\end{align*}
where we abbreviate \( p(c=i | x) \) as \( p(i | x) \). The estimation of the positive and negative probabilities are  coupled. 

In this case, the estimation problem is in principle formulated as the constrained maximum likelihood problem (\textbf{PIPA-M}):
\begin{align}
    &\max_\theta \quad \mathbb{E}_{p^{\data}}\log p^{\theta}(c\mid x,y). \label{eq:mle pipam} \\
    &\emph{s.t.} \quad \quad \prob^\theta(y\mid x) = \prob^{\prior}(y\mid x),~~~\forall x, y. \nonumber
\end{align}
This is by definition the best way statistically to estimate $\prob$ provided the data information $\prob^{\data}$ and the prior constraint $\prob^{\prior}$.

\paragraph{Parameterization of PIPA-M} Recall that our final goal is to estimate $\prob^\theta(y\mid x,c=1)$, which is expected to be parameterized with an autoregressive transformer model. Hence, we are going to parameterize the target
$p^\theta(c\mid x,y)$  using $\prob^\theta(y\mid x,c=1)$ and constraint $p^\theta(y~|~x)$. This can be obtained by Bayes' rule:
\begin{align}
    \max_{\theta} \mathbb{E}_{p^{\data} }p^\theta(c\mid x,y) = \mathbb{E}_{p^{\data} }\frac{p^\theta(y\mid x,c)p^\theta(c\mid x)}{p^\theta(y\mid x)},
    \label{eq:bayes}
\end{align}
which includes the  two terms we are interested with an additional term $p^\theta(c\mid x)$. We set the two terms in the numerator to be learnable and the denominator to be fixed by prior constraint. Denote
\begin{align}
p^\theta(y\mid x, c=1) = f^\theta(y\mid x),  && 
p^\theta(c=1 \mid x) = g^\theta(x)
\end{align}
to be two learnable networks. The likelihood for positive and negative samples are:
\begin{align}
    &p^{\theta}(c=1\mid x,y) = \frac{f^\theta(y\mid x)g^{\theta}(x)}{p^{\prior}(y\mid x)},\label{equ:paramm}\\
    &p^{\theta}(c=0\mid x,y) = 1-p^{\theta}(c=1\mid x,y). \nonumber
\end{align}

Therefore, we reformulate \textbf{PIPA-M} \eqref{eq:mle pipam} as an unconstrained problem by directly maximizing the log-likelihood $\log p(c\mid x,y)$ via \eqref{equ:paramm}. 
The resulting loss is similar to KTO \citep{ethayarajh2024kto} that does not need preference pairs, but it differs notably in the loss formulation. A detailed comparison is provided in the next section.

In addition, in PIPA-M we notice that:
\begin{itemize}
    \item Since $p^\theta(y,c\mid x)=p^\theta(c\mid x,y) p^\theta(y\mid x)$, PIPA-M problem \eqref{eq:mle pipam} is equivalent to directly maximizing the joint probability $p^\theta(y,c\mid x)$ with $p^\theta(y\mid x)$ being a fixed prior.
    \item It's possible that the parameterization with $f^\theta$ and $g^\theta$ makes $p^\theta(c=1\mid x,y)$ large than 1. Theoretically, we can first use a network $g_0^\theta$ and than set $g^\theta(x) = \min(g_0^\theta(x), \prob^{\prior}(y \mid x)/f^\theta(y\mid x))$ to ensure the term to be well-defined. In practice, we observe that such cases are rare and we just apply clipping outsides of $\frac{f^\theta(y\mid x)g^{\theta}(x)}{p^{\prior}(y\mid x)}$ to make it smaller than 1.
\end{itemize}

\subsection{DPO and KTO: Prior-Informed Views}
We analyze existing methods, such as DPO and KTO, through the lens of the prior-informed estimation framework \eqref{equ:priorframe}. Our analysis demonstrates that both DPO and KTO  can be interpreted as enforcing \textbf{prior assumptions on the negative probability}, $p^\theta(y \mid x, c = 0)$, \textbf{rather than the marginal probability}, $p(y \mid x)$. However, these methods differ in their choice of loss functions, the prior assumptions made about $p(c \mid x)$, and the parameterization employed.

\label{sec:dpo}
\paragraph{DPO}
Direct preference optimization (DPO) and related methods are often cast as estimating the log density ratio 
$\log(p(y~|~x, c=1)/p(y~|~x, c=0))$ of the positive and negative generation probabilities \cite{dumoulin2023density}. 
Related, it may not be of suprise that 
these models make the implicit assumption on the negative (reference) probability $p(y~|~x, c=0) = p^{\prior}(y~|~x)$. 
In particular, DPO can be formulated as 
    \begin{align}
        \label{eq:mle dpo}
        & \max_{\theta} 
         L_{\mathtt{DPO}} (p^\theta, ~~p^{\data}) \\ 
        s.t. ~~& \prob^\theta(y\mid x, c=0)=\prob^{\prior}(y\mid x), \nonumber\\
        & \prob^\theta(c=1\mid x)=\prob^\theta(c=0\mid x)=\frac{1}{2}, \quad \forall x, y. 
        \nonumber
    \end{align}
where the loss $L_{\mathtt{DPO}}$ is 
a special pairwise comparison loss 
related to Bradley–Terry model
provided  paired data $\{x, y^{+}, y^-\}$, 
of both positive answer $y^{+} \sim \prob(y\mid x,c=1)$ and  negative answer $y^-\sim \prob(y\mid x,c=0)$ for each $x$; the assumption 
of $\prob^\theta(c=1\mid x)=0.5$ 
is due to the balanced sampling of positive and negative weights. See Appendix \ref{sec:connect dpo} for more discussion for $L_{\mathtt{DPO}}$ and proof of equivalence.

\paragraph{KTO}
One key limitation of DPO is that it requires to use paired data. KTO has been proposed as an approach that relaxes the requirement. 
In the prior-informed framework, it can be viewed as solving: 
    \begin{align*}
    \begin{split}
        &\max_\theta  \mathbb{E}_{\prob^{\data}} \left [ \prob^{\theta}(c|x,y) \right] \\
        &\emph{s.t.} ~~~~\prob^{\theta}(y\mid x,c=0)=\prob^{\prior}(y\mid x) \\
        & \qquad \log\frac{\prob^\theta(c=0\mid x)}{\prob^\theta(c=1\mid x)} = z^\theta(x), ~~~~~
        \quad \forall x,y.
        \end{split}
    \end{align*}
where it changes the loss function to 
the standard conditional likelihood without $\log$, which holds for unpaired data\footnote{In the KTO paper, an importance weight is placed on the positive and negative data ($\lambda_D,\lambda_U$ in their notation), but the default settings in the code is balanced weight ($\lambda_D=\lambda_U=1$).}. 
In addition, it makes a particular assumption on the class ratio ${\prob^\theta(c=0\mid x)}/{\prob^\theta(c=1\mid x)}$, which is consistent with the fact that the class percentage is no longer guaranteed 
to be balanced without paired data. 
In particular, KTO uses 
$$z^\theta(x) = \KL(\prob^{\theta}(y\mid x, c=1)~||~ \prob^{\prior}(y\mid x)).$$
Here $z^\theta$ depends on parameter $\theta$, but the gradient is stopped through $z^\theta$ 
in the KTO algorithm. In practice, $z^\theta$ is estimated with empirical samples in each batch. Details and proof are shown in Appendix \ref{sec:connect kto}.

\begin{figure*}[h!]
    \centering
    \vspace{-0.2cm}
    \includegraphics[width=0.9\linewidth]{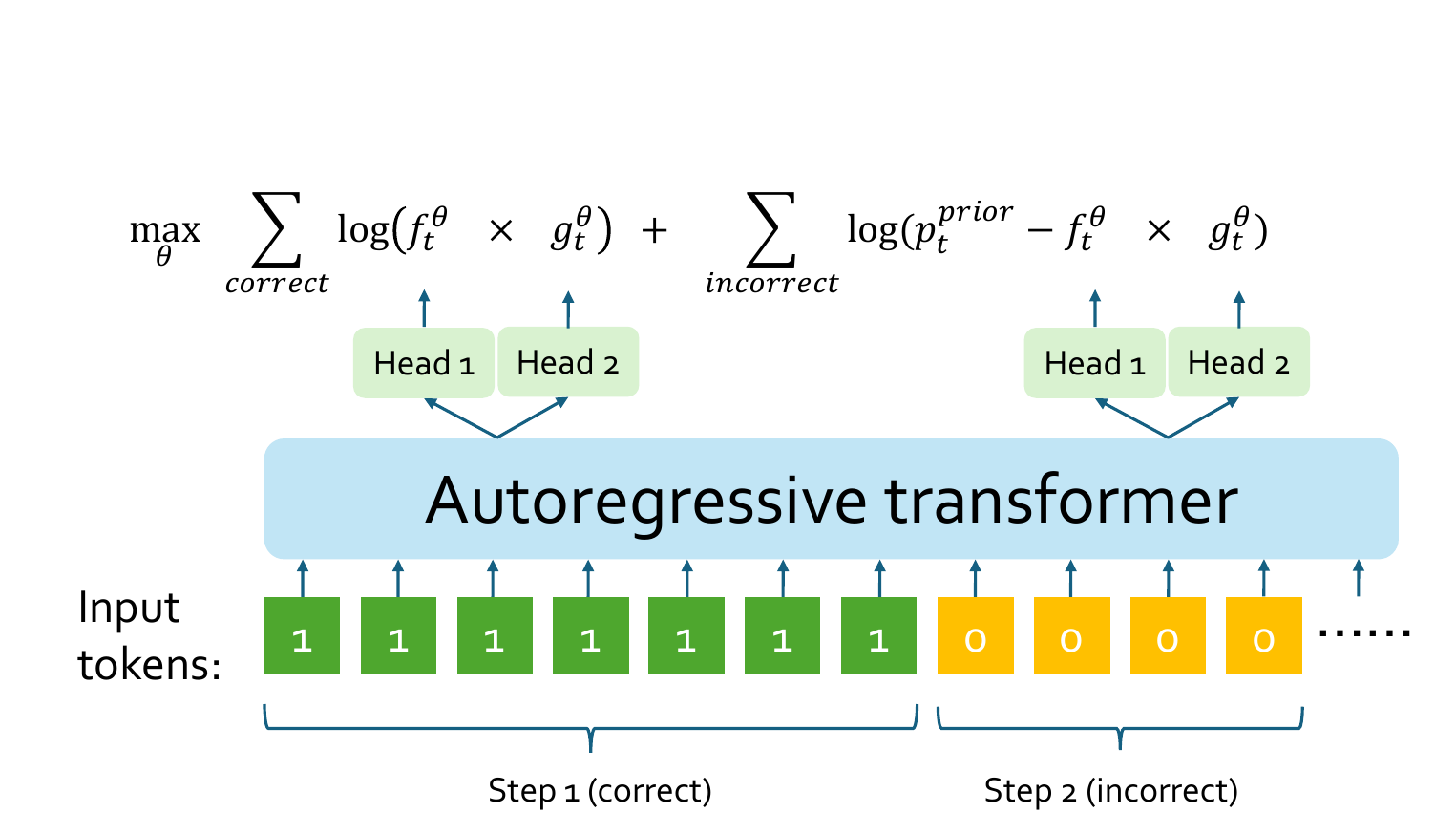}
    \caption{The figure illustrates PIPA-M, with PIPA-N following a different loss function as outlined in Algorithm \ref{alg:PIPA}. We denote $f_t^\theta:=f_{}^\theta(y_t\mid x, y_{<t})=p^\theta(y_t\mid x, y_{<t})$ to be the target next-token-prediction probability, $g_t^\theta:=g^\theta(x,y_{<t})=p^\theta(c_t=1\mid x,y_{<t})$ to be the value model, which differ only in their output heads. And $p_t^{\prior}:=p^{\prior}(y_t\mid x, y_{<t})$ is another frozen language model. When only answer-level annotations are available, the objective will contain only one of the correct and incorrect parts.}
    \vspace{-0.2cm}
    \label{fig:pria}
\end{figure*}

\subsection{PIPA-N: Enforcing Prior on Negative Condition Distribution}
\label{sec:prian}

Knowing the prior informed formulation of DPO and KTO, we can propose changes to make them simpler and more natural. One option is to keep the conditional log-likelihood loss function of KTO, but seeks to learn  $p^\theta(c=1|x)$ as a neural network $g^\theta(x)$ from data, 
rather than making the heuristic assumption. This yields 
\begin{align}
 \label{eq:mle neg}
\begin{split}
    &\max_\theta \mathbb{E}_{\prob^{\data}} \left [ \log \prob^\theta(c\mid x,y) \right ] \\    
    & \emph{s.t.}  \quad \prob^\theta(y\mid x, c=0) = \prob^{\prior}(y\mid x).
    \quad \forall x, y.
    \end{split} 
\end{align}
The only difference with PIPA-M \eqref{eq:mle pipam} is in  the prior constraint.
We call this \textbf{PIPA-N}, for it places prior on the negative probability. 
As in PIPA-M, we apply Bayes’ rule to \( p^\theta(c \mid x, y) \), but additionally expand the denominator \( p^\theta(y \mid x) \) using:  
\begin{align*}
p^\theta(y \mid x) = &p^\theta(y \mid x, c=1)p^\theta(c=1 \mid x) \\
&+ p^\theta(y \mid x, c=0)p^\theta(c=0 \mid x)
\end{align*}
This allows us to incorporate prior information on \( p^\theta(y \mid x, c=0) \).  
With the parameterizations \( p^\theta(y \mid x, c=1) = f^\theta(y \mid x) \) and \( p^\theta(c=1 \mid x) = g^\theta(x) \), we obtain:
\begin{align*}
\begin{split}
    &\prob^\theta(c=1\mid x,y)=
    \tau\left ( 
    \frac{f_{}^\theta(y\mid x)g^\theta(x)}{\prob^{\prior}(y\mid x) (1-g^\theta(x))}
    \right) 
    , \\
    &\prob^{\theta}(c=0\mid x,y)=1-\prob^{\theta}(c=1\mid x,y),
\end{split}    
\end{align*}
where $\tau(x) = x/(x+1)$. 
This allows us to reduce the problem 
to an unconstrained optimization of maximizing 
$\mathbb{E}_{\prob^{\data}} \left [ \log \prob^\theta(c\mid x,y) \right ]$.

\subsection{Extension to Step-level Settings}
\label{sec:step}
The introduction above is about the answer-level setting, which uses a single label for an entire solution—even if multiple steps may vary in correctness. In contrast, the step-level setting assigns separate labels to each step.
The advantage of our probabilistic framework is that it seamlessly adapts to a step-level setting for both PIPA-M and PIPA-N algorithms.  The core idea is intuitive. We use token-level labels rather than answer-level labels, followed by decomposing the joint probability autoregressively.

\paragraph{Problem formulation}
In the step-level setting, we decompose answer $y$ and label $c$ to tokens. Specifically, for each data $(x, y, c)$ with $k$ steps and $T$ tokens in the answer, we have $y=(y_1, \cdots, y_T)$ and $c=(c_1,\cdots, c_T)\in\{0,1\}^T$, where $c_t$ is $1$ if the corresponding step is correct otherwise 0. 
Notice that if only answer-level annotation is available, we can still define $c_1=\cdots=c_T\in\{0,1\}$. For any $1\leq t\leq T$, denote $y_{\leq t}:=(y_1,\cdots, y_t)$ and the same for $c_{\leq t}$. 
Figure \ref{fig:label} presents a visualization comparing token-level representation with the previous sequence-level representation.

\begin{figure}[h!]
    \centering
    \includegraphics[width=0.9\linewidth]{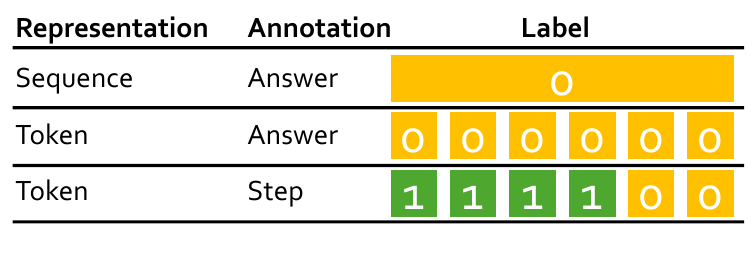}
    \vspace{-0.3cm}
    \caption{We show a visualization for the label of a negative answer under different circumstances.}
    \label{fig:label}
\end{figure}

\paragraph{Parameterization}
Same as the answer-level setting, the objective is still (\ref{eq:mle pipam}) for PIPA-M or (\ref{eq:mle neg}) for PIPA-N. 
We factorize $\prob^\theta(c\mid x,y)$ in an autoregressive manner for $c$.
Since $c_t$ can be determined by $(x, y_{\leq t})$, so $c_t$ is \textbf{conditionally independent} of both $y_{>t}$ and ${c_{<t}}$ given $(x, y_{\leq t})$. We have
\begin{align}
    \prob(c\mid x,y)=\prod_t \prob(c_t\mid x, y, c_{<t}) =\prod_t \prob(c_t\mid x,y_{\leq t}). \nonumber
\end{align}
By Bayes' Theorem, for each $t$ we have:
\begin{align}
    \prob(c_t\mid x, y_{\leq t})=\frac{\prob(y_t\mid x, y_{<t}, c_t)\prob(c_t\mid x, y_{<t})}{\prob (y_t\mid x_t, y_{<t})}. \nonumber
\end{align}
Now similar to the answer-level PIPA, we introduce neural networks to parameterize $f^{\theta}_{}(y_t\mid x,y_{<t}):=\prob ^\theta(y_t\mid x, y_{<t}, c_t=1)$ and $g^{\theta}(x,y_{<t}):=\prob^\theta(c_t=1\mid x, y_{<t})$. 

Then we solve the same unconstrained optimization problem as answer-level setting, \emph{i.e.,} maximizing $\mathbb{E}_{p^{\data}}[\log \prob^\theta(c\mid x,y)]$. See Algorithm \ref{alg:PIPA} for details.

\subsection{Practical Implementation}
\label{sec:practical}

As shown in the formulation of Section \ref{sec:step}, we always use PIPA-M and PIPA-N with token-level label representation in practice which provides fine-grained information.

We have three models in total: $f^{\theta}_{}, g^{\theta}, p^{\prior}$. In practice, $f^{\theta}_{}$ is the target language model initialized from the one obtained after Supervised Fine-Tuning (SFT) and will be used for inference after training. $g^{\theta}$ shares exactly the same base model with $f^{\theta}_{}$ differing only in the output head. $\prob^{\prior}$ is also a language model but frozen. 
We show our PIPA-M and PIPA-N in Algorithm \ref{alg:PIPA}, and a figure illustration in Figure \ref{fig:pria}.
For the negative samples in PIPA-M, we apply a clipping function to constrain the term $f_{}^\theta g^\theta/\prob^{\prior}$ within $[0, 1-\varepsilon]$, where $\varepsilon=10^{-6}$.

\paragraph{Comparison with KTO} 
Although it stems from a completely different derivation, KTO \cite{ethayarajh2024kto}—which also targets pair-free alignment—is the closest prior work to PIPA in terms of its algorithm.  A detailed analysis of their differences and PIPA's advantages can be found in Appendix \ref{sec:compare kto}.

\paragraph{Credit assignment}
A key issue with traditional DPO and KTO loss is that all tokens receive equal treatment, 
since only the answer-level reward is considered. PIPA offers a natural way 
to weight 
$f_{}^\theta(y_t \mid x, y_{<t})$
differently by jointly optimizing it with 
$g^\theta(x,y_{<t}) = \prob^\theta\bigl(c_t=1 \mid x, y_{<t}\bigr).$
The optimized \( g^\theta \), with its clear probabilistic interpretation, can be viewed 
as a value function and may be used for inference-time search in future work. We present 
its learning trajectory in Section \ref{sec:algo ablation}.

\paragraph{Compatibility and flexibility}
PIPA can be applied whenever answer- or step-level annotations are available, requiring no additional training stage. These step-level annotations can be derived via MCTS \cite{chen2024alphamath,zhang2024rest,guan2025rstar} or by LLM-as-a-judge \cite{Step-dpo,stepkto}. Furthermore, PIPA easily generalizes to an iterative version, similar to other works \cite{xiong2024iterative,pang2024iterative,oreo}. In this paper, we focus on statistical estimation using a static offline dataset, leaving the online version for future exploration.

\section{Experiments}
\subsection{Settings}
Our PIPA framework is capable of handling scenarios both with and without preference pairs, as well as with or without step-level annotations.
Consequently, we evaluate it across four distinct experimental setups determined by 
\((\text{pair}, \text{unpair}) \times (\text{answer}, \text{step})\).
In this work, we primarily focus on math reasoning tasks, as they serve as a strong testbed for the scenarios under consideration. We leave systematic exploration of general tasks for future work.

\paragraph{Baseline algorithms}
For paired preference data, we use DPO \cite{rafailov2024direct} and its variant IPO \cite{IPO}, and KTO \cite{ethayarajh2024kto} as baselines. Both KTO and PIPA decouple the paired data. For data without preference pairs, we compare PIPA with KTO. In answer-wise settings, we benchmark PIPA against DPO, IPO, and KTO. In step-wise settings, we compare PIPA with Step-DPO \cite{Step-dpo} and Step-KTO \cite{stepkto}. 
The original Step-DPO and Step-KTO methods involve additional data generation phase. For a fair comparison on an offline dataset, we extract only their loss functions.
See Appendix \ref{sec:bsl algo} for detailed descriptions of the baseline algorithms.

\paragraph{Data} 
We use the unpaired preference dataset for math reasoning released by AlphaMath \cite{chen2024alphamath}, which includes training problems from GSM8K \cite{GSMdata} and MATH~\cite{MATHdata} with both CoT \cite{CoT} and TIR \cite{gou2023tora}-style solutions, along with step-level label annotations.
There are more incorrect answers in this dataset, so we construct the paired subset by matching each correct solution to a corresponding incorrect solution from the same problem and discarding the remaining incorrect solutions.
We use the entire original dataset for the unpaired setting.

For the answer-level setting, we only keep the final label for the answer.
In the step-level setting, steps of a correct answer are always correct. For incorrect answer, we label steps whose $Q$ values fall within $[-1, 0.5)$ as incorrect, and those in $[0.5, 1]$ as correct with a threshold $0.5$.
Instead of a threshold $0$, the intuition of this shifted threshold is that it's better to be conservative for the correct steps in the wrong answer.
Despite being correct, some steps may still contribute to an incorrect overall analysis. Therefore, minimizing the likelihood of such steps is also crucial. The efficacy of this choice is further explored in the ablation study of Section \ref{sec:step ablation}.

For our evaluation, we use the standard GSM8K and MATH benchmarks.
We adopt the MARIO evaluation toolkit~\cite{zhang2024mario}, configuring the beam search width and number of generated samples, \emph{i.e.}, ($B_1, B_2$) in their notation, to be $(3,1)$ for GSM8K and $(1,1)$ for MATH.

\paragraph{Model}
The AlphaMath dataset is generated using Deepseek-based models, so we use Deepseek-Math-7B-Instruct~\cite{shao2024deepseekmath} as the pre-trained model for self-alignment. Additionally, to evaluate generalization capabilities, we test Qwen2.5-Math-7B-Instruct~\cite{yang2024qwen2} as the base model on the same dataset.
For PIPA, we set the head of $g^{\theta}$ to be a two-layer MLP followed by a Sigmoid function, with hidden dimension $4096$ same as the base model.

\paragraph{Training}
All experiments are conducted based on OpenRLHF \cite{hu2024openrlhf}. For all training, we use LoRA \cite{hu2021lora} with rank 64 and $\alpha=16$. All alignment algorithms are conducted for 1 epoch after the SFT stage. Denote $bs$ to be the batch size and $lr$ to be the learning rate. we do grid search for $lr\in\{5\times10^{-7}, 5\times10^{-6}, 5\times10^{-5}\}$ for all experiments and present the best one.

\begin{itemize}
    \item \textbf{SFT} Before all alignment algorithms, we first fine-tune the pre-trained Deepseek and Qwen models on the positive samples for 3 epochs with $bs=1024$ and $lr=4\times 10^{-5}$. The model obtained after SFT is then used as the initilization for the target model $f^{\theta}_{}$ in alignment procedures, as well as the fixed reference model in DPO and KTO. Furthermore, to avoid extra computation, this same model serves as the  prior in both PIPA-M and PIPA-N, ensuring that PIPA does not require an additional training phase compared to DPO and KTO.
    \item \textbf{DPO} For DPO-based algorithms including DPO, IPO, Step-DPO, we train 1 epoch after the SFT stage, with $bs=256, lr=5\times10^{-7}$ and $\beta=0.1$.
    \item \textbf{KTO} For KTO, we set $lr=5\times10^{-5}$ for Deepseek model and $lr=5\times 10^{-7}$ for Qwen model. For both, $bs=256, \beta=0.1$. Step-KTO shares exactly the same recipe with KTO.
    \item \textbf{PIPA} We set $bs=256$ for all four settings, $lr=5\times10^{-5}$ for Deepseek and $5\times10^{-7}$ for Qwen. All settings are the same as KTO and Step-KTO, without additional hyperparameters to be tuned.
\end{itemize}

\subsection{Main Results}
We show our main results in Table \ref{tab:main}. We can see that for all four settings and two models, PIPA achieves the best performance without additional computation cost.

\begin{table*}[h!]
\centering
\begin{tabularx}{0.9\textwidth}{X | X |>{\hsize=2.5\hsize}X | >{\hsize=0.625\hsize}X >{\hsize=0.625\hsize}X | >{\hsize=0.625\hsize}X >{\hsize=0.625\hsize}X}
\toprule
\multirow{2}{*}{\textbf{Data}} & \multirow{2}{*}{\textbf{Annotation}} & \multirow{2}{*}{\textbf{Algorithm}} & \multicolumn{2}{c}{\textbf{GSM8K}} & \multicolumn{2}{c}{\textbf{MATH}} \\ \cline{4-7}
& & & \textbf{DS} & \textbf{QW} & \textbf{DS} & \textbf{QW} \\ 
\midrule
\multirow{8}{*}{Paired} & \multirow{5}{*}{Answer-wise} & DPO \cite{rafailov2024direct}    & 68.39 & 67.17 & 46.94 & 47.78 \\  
                        &                               & IPO \cite{IPO}    & 69.14 & 72.33 &  46.94 & 49.96 \\ 
                        &                               & KTO \cite{ethayarajh2024kto}    & 76.72 & 62.47 & 47.38 & 46.53 \\ 
                        &                               & \textbf{PIPA-M}  & 79.08 & \textbf{73.77} & 50.82 & \textbf{51.60} \\ 
                        & & \textbf{PIPA-N} & \textbf{80.29} & 70.89 & \textbf{52.32} & 47.26 \\ \cline{2-7}
                        & \multirow{4}{*}{Step-wise}    & Step-DPO \cite{Step-dpo} & 68.54 & 66.11 & 46.96 & 48.38 \\ 
                        &                               & Step-KTO \cite{stepkto} & 75.44 & 62.47 & 47.38 & 45.64 \\ 
                        &                               & \textbf{PIPA-M}  & \textbf{79.15} & \textbf{74.91} & 51.94 & \textbf{53.26} \\
                        & & \textbf{PIPA-N} & 78.70 & 73.84 & \textbf{52.54} & 49.06 \\
                        \hline
\multirow{4}{*}{Unpaired} & \multirow{3}{*}{Answer-wise} & KTO \cite{ethayarajh2024kto}   & 76.04  & 64.44 & 46.72 & 47.08 \\ 
                          &                               & \textbf{PIPA-M}  & 79.08  & \textbf{74.75} & 51.04 & \textbf{52.78} \\ 
                          & & \textbf{PIPA-N} & \textbf{80.97} & 74.22 & \textbf{52.22} & 52.00 \\ \cline{2-7}
                          & \multirow{3}{*}{Step-wise}    & Step-KTO \cite{stepkto} & 76.81 & 64.14 & 46.98  & 45.64  \\ 
                          &                               & \textbf{PIPA-M}  & 78.24 & \textbf{74.22} & 51.82 & \textbf{53.10} \\ 
                          & & \textbf{PIPA-N} & \textbf{79.98} & 72.86 & \textbf{52.78} &  52.52\\
                          \bottomrule
\end{tabularx}
\caption{Results on GSM8K and MATH. DS means Deepseek-based models, and QW means Qwen-based models.}
\label{tab:main}
\end{table*}

\subsection{Additional Analysis and Ablation Studies}
We conduct a more detailed analysis from two perspectives. Section \ref{sec:algo ablation} delves into further studies on our PIPA itself. Section \ref{sec:step ablation} examines the step-level and answer-level settings.

\subsubsection{Algorithms}
\label{sec:algo ablation}

\paragraph{PIPA-M vs. PIPA-N}
PIPA-M and PIPA-N are two versions of our framework that incorporate distinct prior constraints. As shown in Table \ref{tab:main}, neither variant consistently outperforms the other. Notably, PIPA-N tends to perform better with the Deepseek model, while PIPA-M shows superior results with the Qwen model. This may suggest that PIPA-N is better suited for self-alignment scenarios, while PIPA-M is more effective for alignment tasks where there is a distribution shift between the model and the data. 
In practice, we recommend experimenting with both variants to determine the optimal choice for your specific use case.

\paragraph{Value model $\prob(c_t\mid x,y_{<t})$} Our framework employs two components: the target model $\prob^\theta(y\mid x,c=1)$ and a value model $\prob^\theta(c_t=1\mid x,y_{<t})$. These are jointly trained through optimization of their combined probabilistic objective, rather than being learned separately. Downstream task evaluations confirm that $\prob^\theta(y\mid x,c=1)$ is well optimized. 
To assess the impact of the value model, we present the results of removing it in Table \ref{tab:remove}, where the performance decline highlights its importance.
\begin{wraptable}{r}{3cm}
    \begin{tabular}{c c} 
    GSM8K & MATH \\
    \toprule
    \textbf{78.24} & \textbf{51.82} \\
    \midrule
    73.54 & 47.92 \\
    \bottomrule
    \end{tabular}
    \vspace{-0.2cm}
    \caption{In the second row, $\prob^\theta(c_t\mid x,y_t)$ is fixed at 0.5.}
    \vspace{-0.2cm}
    \label{tab:remove}
\end{wraptable}
To examine the value model's learning behavior, we plot the training trajectory of $\prod_t \left(\prob^\theta(c_t\mid x,y_{<t})\right)^{1/T}$ using Deepseek model in the step-wise setting for the first 300 steps in Figure \ref{fig:value}. The implicit optimization process yields continuous improvement in likelihood estimation, with the model's predictions showing a steady increase from the initial random-guess baseline of $0.5$. This empirical validation establishes a foundation for using the optimized value model $\prob^\theta(c_t=1\mid x, y_{<t})$ to implement search during inference, presenting a clear direction for future research.
\begin{figure}[h!]
    \centering
    \includegraphics[width=0.9\linewidth]{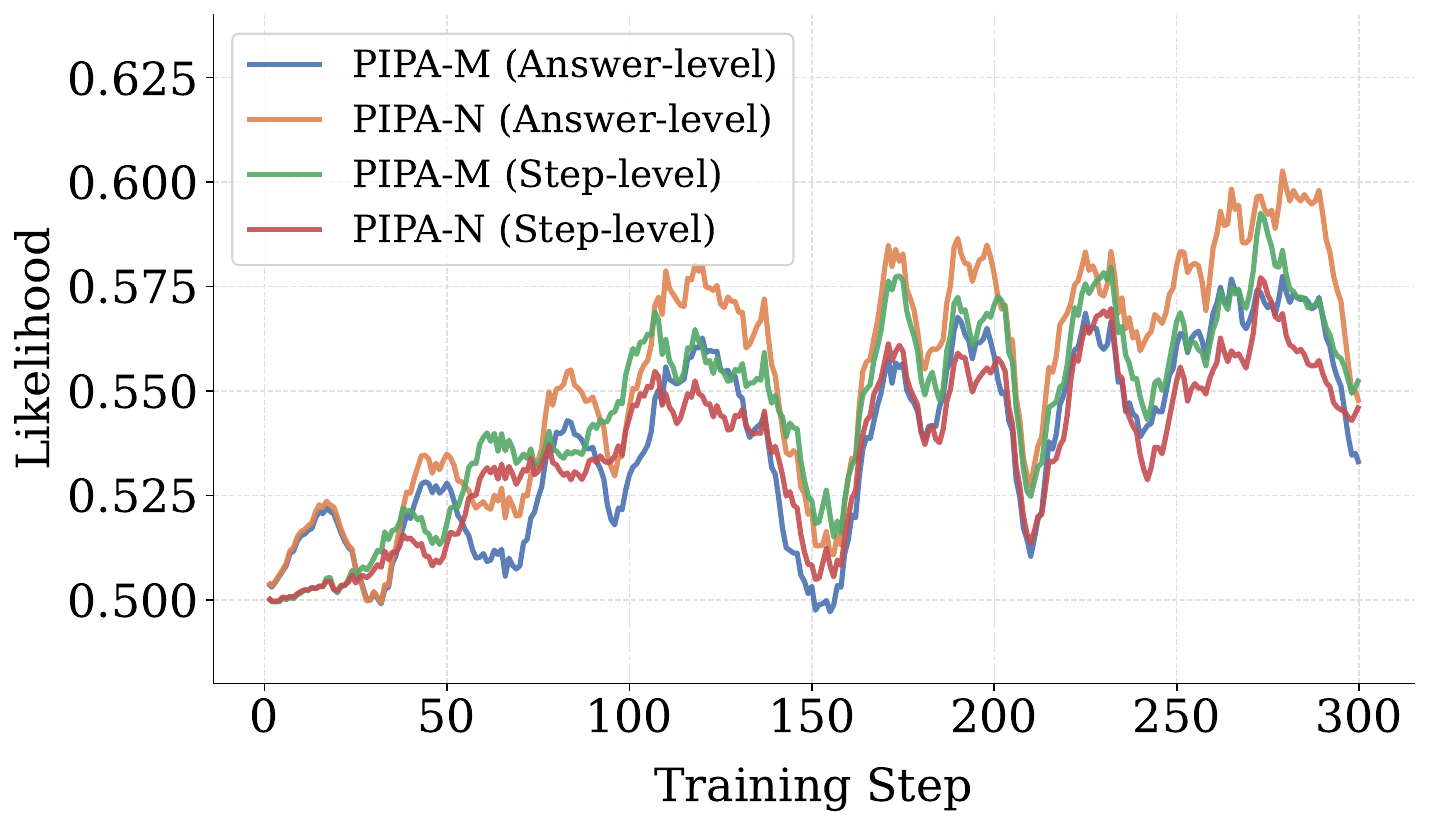}
    \caption{We plot the geometrically averaged likelihood  $\left(\Pi_t\prob^\theta(c_t\mid x, y_{<t})\right)^{\frac{1}{T}}$ for PIPA-M and PIPA-N with answer-level annotation and step-level annotation respectively during training, showing a consistent increase.
    }
    \vspace{-0.2cm}
    \label{fig:value}
\end{figure}
 
\paragraph{Prior choices}
To ensure a fair comparison with baseline methods such as DPO and KTO, we utilize the same SFT model for initializing $\prob^\theta(y\mid x,c=1)$ and setting priors. In our PIPA model, however, the priors should ideally be $\prob(y\mid x)$ for PIPA-M and $\prob(y\mid x,c=0)$ for PIPA-N, which differ from the SFT model's $\prob^\theta(y|x,c=1)$. To further investigate PIPA with accurate priors, we began with the released Deepseek model, training it on both positive and negative samples for three epochs to derive $\prob^\theta(y\mid x)$, and solely on negative samples for three epochs to obtain $\prob^\theta(y\mid x,c=0)$. Unfortunately, these adjustments did not yield improvements. Results are shown in Table \ref{tab:true prior}.
This may be due to the lack of training on marginal or negative samples in the initial pre-training and fine-tuning stages of model development, meaning that a few epochs of fine-tuning are insufficient to establish accurate priors for these distributions.

\paragraph{Additional SFT loss} 

Previous studies \cite{pang2024iterative, dubey2024llama} have shown that incorporating an additional SFT loss in DPO enhances its stability. We extend it to the unpaired setting, applying it to our PIPA-M and KTO algorithms in the answer-wise case, with an SFT loss coefficient set at 1.0. 
As shown in Table \ref{tab:add sft}, incorporating additional SFT loss provides more advantages for KTO compared to our PIPA, yet it remains less effective than our PIPA. This is because our PIPA is theoretically grounded for a general case and does not require further modifications to the loss function.

\begin{table}[h!]
    \centering
    \makebox[\columnwidth]{ 
        \begin{minipage}{0.48\columnwidth}
            \centering
            \resizebox{\textwidth}{!}{ 
                \begin{tabular}{c|c c}
                \toprule
                    {} & GSM8K & MATH \\
                \midrule
                     PIPA-M & \textbf{78.24} & \textbf{51.82} \\
                     PIPA-M(T) & 77.86 & 50.60\\
                \midrule
                     PIPA-N & \textbf{79.98} & \textbf{52.78} \\
                     PIPA-N(T) & 79.83 & 50.84 \\
                \bottomrule
                \end{tabular}
            }
            \caption{PIPA with different priors. (T) denotes further fine-tuning of the SFT model on all or negative samples.}
            \label{tab:true prior}
        \end{minipage}
        \hspace{0.02\columnwidth} 
        \begin{minipage}{0.48\columnwidth}
            \centering
            \resizebox{\textwidth}{!}{
                \begin{tabular}{c|c c}
                \toprule
                    {} & GSM8K & MATH \\
                \midrule
                     KTO & 76.04  & 46.72 \\
                     KTO+SFT & 76.27 & 47.96 \\
                \midrule
                     PIPA-M & \textbf{79.08} & \textbf{50.82} \\
                     PIPA-M+SFT & 78.24 & 50.12 \\
                \bottomrule
                \end{tabular}
            }
            \caption{Effect of additional SFT loss on KTO and PIPA-M.}
            \label{tab:add sft}
        \end{minipage}
    }
\end{table}

\subsubsection{Step-level setting}
\label{sec:step ablation}
As our research is the first to systematically explore the performance of alignment algorithms across various settings, we provide an in-depth analysis of the step-level setting in this section. We aim to understand the advantages of step-level annotation and how it influences the training.

\paragraph{Influence of step-level annotation}

From Table \ref{tab:main}, we observe that step-level annotation does not consistently improve performance when comparing answer-wise and step-wise annotation within the same algorithm and dataset. Specifically, step-level annotation proves beneficial for MATH but can sometimes negatively impact GSM8K. This finding aligns with previous studies \cite{chen2024alphamath}, suggesting that step-level annotation is more advantageous for challenging reasoning tasks like MATH but may be unnecessary or even harmful for simpler tasks like GSM8K.

\paragraph{Reward curve}
As shown in previous works \cite{pang2024iterative,dubey2024llama,razin2024unintentional,liu2024provably}, one problem in DPO is that the implicit reward for positive samples $\log\frac{\prob^\theta(y\mid x,c=1)}{\prob^\prior(y\mid x))}$ can also decrease during preference learning—undesirable, since this term is precisely what DPO aims to optimize.
Their approach addresses this problem by adding extra SFT loss during DPO training. We have noticed similar patterns in our DPO experiments. Notably, we found that employing step-level annotation can effectively address this issue. This observation offers an alternative angle for tackling the problem in DPO, stemming from the absence of step-level annotation. It's possible that some steps in incorrect answers are actually correct and share similarities with the distribution of correct answers. Consequently, minimizing these correct steps in incorrect answers with the original answer-level DPO could also reduce the likelihood of correct answers. 
Our results highlight the importance of fine-grained, step-level annotation alignment from a new perspective.

\begin{figure}
    \centering
    \includegraphics[width=1.0\linewidth]{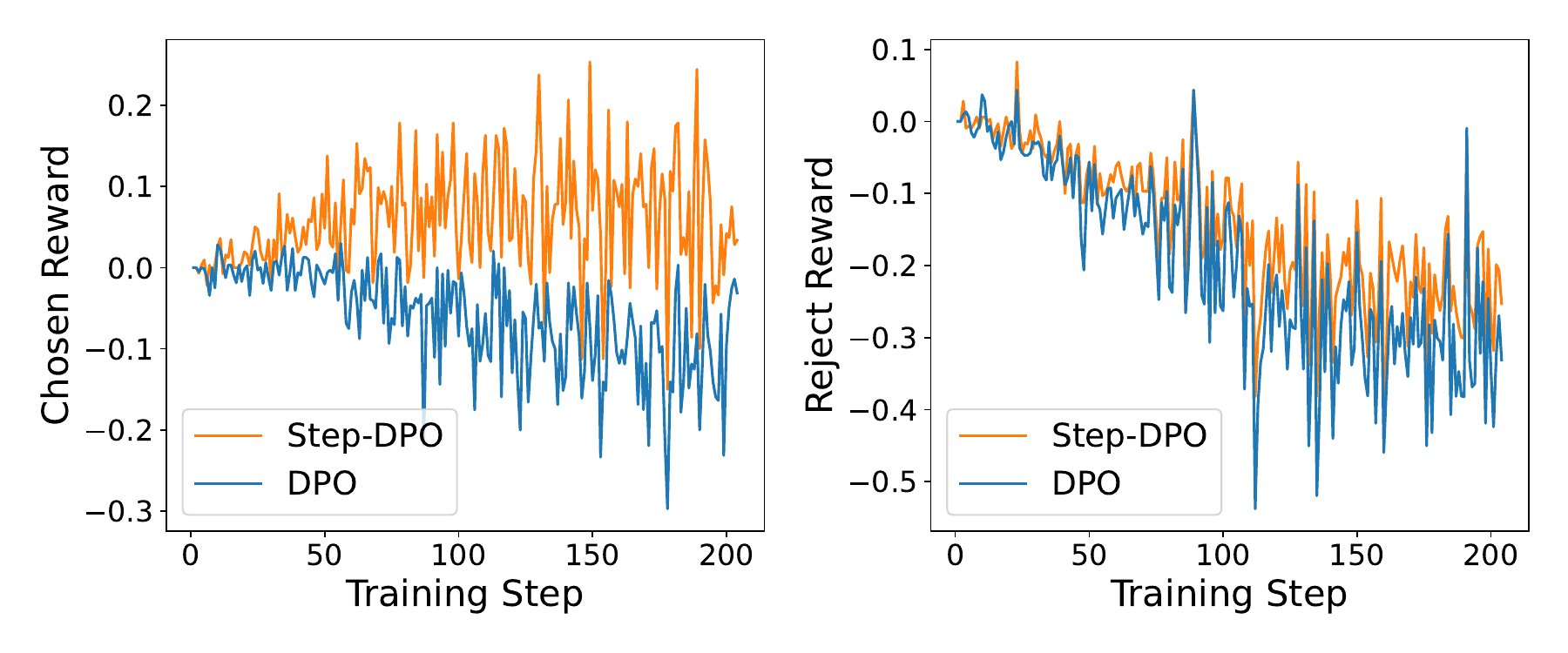}
    \vspace{-0.4cm}
    \caption{We present plots of $\log\frac{\prob^\theta(y\mid x,c=1)}{\prob^{\prior}(y\mid x))}$ for both correct and incorrect samples, shown in the left and right figures respectively, for both the original DPO and Step-DPO. While the term for correct samples decreases as observed in previous studies, it increases in Step-DPO.}
    \vspace{-0.2cm}
    \label{fig:enter-label}
\end{figure}

\paragraph{Threshold for positive steps}
The step-level annotation, specifically the Q value obtained by MCTS in our AlphaMath dataset \cite{chen2024alphamath}, is presented in a continuous format ranging from $[-1, 1]$.
The similar continuous format is employed in other annotation pipelines such as LLM-as-a-judge \cite{Step-dpo}.
\begin{wrapfigure}{r}{4cm}
    \centering
    \includegraphics[width=1.0\linewidth]{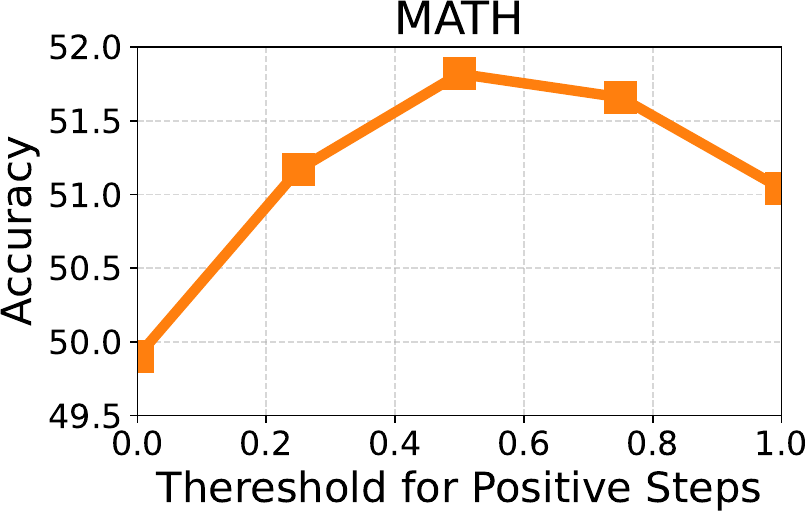}
    \vspace{-0.5cm}
    \caption{Accuracy on MATH with different threshold for the positive steps in wrong answers.}
    \label{fig:threshold}
\end{wrapfigure}
In our main experiments, we employ a default threshold of 0.5 for labeling correct steps in incorrect answers. As analyzed in Figure \ref{fig:threshold}, which evaluates the impact of varying threshold values, we observe that an intermediate threshold achieves optimal performance. This balance ensures cautious filtering of positive steps in negative answers while retaining sufficient high-quality positive steps to maintain learning efficacy.

\section{Conclusion}
In this paper, we introduce Prior-Informed Preference Alignment (PIPA), a fully probabilistic framework grounded in statistical estimation. We analyze the limitations of pure SFT within this framework and demonstrate that DPO and KTO emerge as special cases with distinct prior constraints. We propose two variants, PIPA-M and PIPA-N, each incorporating different prior constraints. Through comprehensive evaluation across four distinct data settings, we systematically highlight PIPA’s advantages over previous methods. Additionally, ablation studies reveal the optimal design of PIPA, while empirical analysis explores the impact of step-level annotation from multiple perspectives, leveraging PIPA’s flexibility.

\section*{Acknowledgment}
We thank the anonymous reviewers for their helpful suggestions.
 Z. Wang is in part supported by NSF Award 2145346 (CAREER) and 2212176. 
 This research has been supported by computing support on the Vista GPU Cluster through the Center for Generative AI (CGAI) and the Texas Advanced Computing Center (TACC) at the University of Texas at Austin.

\section*{Impact Statement}
This paper introduces a principled statistical framework for LLM alignment that allows us to both understand existing methods and deriving new efficient ones. 
Experimental results highlight its effectiveness in improving LLMs' mathematical reasoning, while its applicability extends to broader social implications, such as enhancing LLM safety, in line with previous research in the field. 

\bibliography{ref}
\bibliographystyle{icml2025}

\newpage
\appendix
\onecolumn
\section{Detailed Discussion about Connection to DPO and KTO}
\label{sec:proof}

\subsection{The DPO Loss}
\label{sec:connect dpo}

DPO uses a pairwise comparison loss on paired positive and negative data.  
Denote the pair data to be $\{x^i, {y^i}^+, {y^i}^-\}$, where ${y^i}^+$ is the chosen answer sampled from $\prob(y\mid x,c=1)$ and ${y^i}^-$ sampled from $\prob(y\mid x,c=0)$ is the rejected answer. 
Using our notation, the DPO objective \cite{rafailov2024direct} is 
\begin{align}
    \max_\theta 
    \mathbb{E} [\log\sigma&\left(r^\theta (y^+, y^-, x)\right)],
    \label{eq:dpo original}
\end{align}
where 
\begin{align*}
r^\theta (y^+, y^-, x) = \log\frac{\prob^\theta(y^+\mid x,c=1) \prob^{\prior}(y^-\mid x)}{\prob^{\theta}(y^-\mid x,c=1) \prob^{\prior}(y^+\mid x)}. 
\end{align*}

From our perspective, DPO can be viewed as 
minimizing a pairwise comparison loss, 
subject to an prior assumption on the negative probability $p(y\mid x, c= 0)$, rather than the marginal probability $p(y\mid x)$. We show it in the following Theorem.
\begin{theorem}
Draw $(x,y,c)$ from a joint distribution $p$, for which $p(c=1\mid x)=1/2$. Further, set the sample $y^{c} = y$, and draw an independent contrastive sample via  $y^{\neg c} \sim p(\cdot \mid x, 1-c).$  
    Then maximizing the original DPO objective (\ref{eq:dpo original}) is equivalent to solving the following problem:
    \begin{align}
        \label{eq:mle dpo}
        \begin{split}
        & \max_{\theta} 
        \quad \mathbb{E}_{\prob} \left [ \log \prob^\theta(c  \mid x, (y^{c}, y^{\neg c})) \right]  \\
        &s.t. \quad  \quad \prob^\theta(y\mid x, c=0)=\prob^{\prior}(y\mid x), \\
        & \qquad \qquad \prob^\theta(c=1\mid x)=\prob^\theta(c=0\mid x)=0.5, \forall x, y.
        \end{split}
    \end{align}
    \label{thm:dpo}
\end{theorem}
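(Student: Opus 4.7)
The plan is to reduce the constrained conditional-likelihood problem on the right-hand side to the pairwise DPO loss on the left by a direct Bayes-rule computation, then check that the two cases $c=1$ and $c=0$ collapse to the same expression thanks to the antisymmetry of the log-ratio $r^\theta$.

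First I would unfold $p^\theta(c \mid x,(y^c,y^{\neg c}))$ via Bayes' theorem. Conditional on $c$, the two samples $y^c$ and $y^{\neg c}$ come from $p^\theta(\cdot\mid x,c)$ and $p^\theta(\cdot\mid x,1-c)$ respectively, so
\begin{align*}
 p^\theta\bigl((y^c,y^{\neg c})\mid x,c\bigr) = p^\theta(y^c\mid x,c)\,p^\theta(y^{\neg c}\mid x,1-c).
\end{align*}
Applying Bayes together with the two constraints $p^\theta(c=1\mid x)=p^\theta(c=0\mid x)=1/2$ and $p^\theta(y\mid x,c=0)=p^{\prior}(y\mid x)$, the class priors cancel in numerator and denominator, and every $p^\theta(\cdot\mid x,c=0)$ factor becomes $p^{\prior}(\cdot\mid x)$. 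After a short calculation, the conditional probability becomes a sigmoid of the DPO log-ratio evaluated at the correctly-ordered pair.

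Next I would handle the two cases in the expectation separately. When $c=1$, we have $(y^c,y^{\neg c})=(y^+,y^-)$ so that $\log p^\theta(c=1\mid x,(y^+,y^-))=\log\sigma(r^\theta(y^+,y^-,x))$ directly. When $c=0$, we have $(y^c,y^{\neg c})=(y^-,y^+)$, and $\log p^\theta(c=0\mid x,(y^-,y^+))=\log(1-\sigma(r^\theta(y^-,y^+,x)))$. Using the elementary identity $r^\theta(y^-,y^+,x)=-r^\theta(y^+,y^-,x)$ and $1-\sigma(-t)=\sigma(t)$, this again equals $\log\sigma(r^\theta(y^+,y^-,x))$. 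Hence both summands in the expectation coincide with a single DPO term, and after identifying the sampling distribution over pairs with the one implicit in \eqref{eq:dpo original}, the two objectives agree up to a constant.

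The main obstacle I anticipate is purely bookkeeping rather than conceptual: keeping the ordering convention $(y^c,y^{\neg c})$ consistent with the $(y^+,y^-)$ ordering used in $r^\theta$, and making sure that the independence of the contrastive draw from $c$ (conditional on $x$) is used correctly when writing the joint $p^\theta((y^c,y^{\neg c})\mid x,c)$. Once those indexing issues are handled, the cancellation of the $1/2$ prior and the collapse of $p^\theta(\cdot\mid x,c=0)$ to $p^{\prior}$ are mechanical, and the final identification with $L_{\mathtt{DPO}}$ is immediate.
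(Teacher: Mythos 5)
Your proposal is correct and follows essentially the same route as the paper: Bayes' rule on the pair, the product factorization $p^\theta((y^c,y^{\neg c})\mid x,c)=p^\theta(y^c\mid x,c)\,p^\theta(y^{\neg c}\mid x,1-c)$, cancellation of the uniform class prior, substitution of $p^{\prior}$ for the $c=0$ conditional, and recognition of the resulting ratio as $\sigma(r^\theta(y^+,y^-,x))$. Your explicit two-case check via the antisymmetry $r^\theta(y^-,y^+,x)=-r^\theta(y^+,y^-,x)$ is just an unpacked version of the paper's indicator-function definition of $(y^+,y^-)$, so the arguments coincide.
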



\begin{proof}
    Using Bayes' Theorem, we have:
    \begin{align}
        \prob^\theta(c \mid x, (y^c, y^{\neg c})) = \frac{\prob^\theta((y^c, y^{\neg c})\mid x, c)\prob^\theta(c\mid x)}{\prob^\theta((y^c, y^{\neg c})\mid x)}. \nonumber\\
    \end{align}
    Denote 
    \begin{align}
    h^\theta(y^c, y^{\neg c}, x, c) :=\prob^\theta(y^c\mid x, c)\prob^\theta(y^{\neg c}\mid x,1-c), \nonumber
    \end{align}
    Define $(y^+, y^-)=\mathbbm{1}_{\{c=1\}}(y^c, y^{\neg c}) + \mathbbm{1}_{\{c=0\}}(y^{\neg c}, y^c)$. We have
    \begin{align}
        \prob^\theta(c \mid x, (y^c, y^{\neg c})) 
        &=\frac{h^\theta(y^c, y^{\neg c}, x, c)\prob^\theta(c\mid x)}{h^\theta(y^c, y^{\neg c}, x, c)\prob^\theta(c\mid x) + h^\theta(y^c, y^{\neg c}, x, 1-c)\prob^\theta(1-c \mid x)}
        \nonumber \\
        &=\frac{\prob^\theta(y^c\mid x, c)\prob^\theta(y^{\neg c}\mid x,1-c)}{\prob^\theta(y^c\mid x, c)\prob^\theta(y^{\neg c}\mid x,1-c) + \prob^\theta(y^c\mid x, 1-c)\prob^\theta(y^{\neg c}\mid x,c)} \nonumber\\
        &=\frac{p^\theta(y^+\mid x, c=1) p^{\text{prior}}(y^{-}\mid x)}
             {p^\theta(y^+\mid x, c=1) p^{\text{prior}}(y^-\mid x) + p^{\text{prior}}(y^+\mid x) p^\theta(y^{-}\mid x, c=1)}
        \label{eq:dpo bayes}
    \end{align}
    On the other hand, notice that $\log\sigma(x)=-\log(1+\exp(-x))$. So the original DPO loss is
    \begin{align*}
        &\max_\theta 
        \log\sigma\left(r^\theta (y^+, y^-, x)\right)\\
        &=-\log (1+\exp(-r^\theta (y^+, y^-, x))) \\
        &=-\log \left(1+\frac{\prob^{\theta}(y^-\mid x,c=1) \prob^{\prior}(y^+\mid x)}{\prob^\theta(y^+\mid x,c=1) \prob^{\prior}(y^-\mid x)}\right).
    \end{align*}
    From this, it's straightforward to see that DPO loss \eqref{eq:dpo original} is equivalent to applying $-\log()$ to (\ref{eq:dpo bayes}).
\end{proof}

Therefore, Theorem \ref{thm:dpo} shows that DPO is well recovered by our framework. 
In DPO, besides injecting prior for $\prob(y\mid x,c=0)$ instead of $\prob(y\mid x)$, it has additional prior $\prob^\theta(c=1\mid x)=\prob^\theta(c=0\mid x)$. A direct idea is to remove this prior, and set $\prob^\theta(c=1\mid x)$ to be a learnable model for $x$ similar to PIPA-M.

\subsection{Connection to KTO}
\label{sec:connect kto}

The KTO objective is
\begin{align}
    \max_\theta \mathbb{E}_{p^{\data}}\quad &\left[c\sigma\left(h_\theta(x,y)-z(x)\right)+ (1-c) \sigma\left(-h_\theta(x,y)+z(x)\right)\right], 
    \label{eq:kto original}
\end{align}
where 
$$h_\theta(x,y):=\log\frac{\prob^{\theta}(y\mid x, c=1)}{\prob^{\prior}(y\mid x)},$$
and  
$$z(x):=\KL(\prob^{\theta}(y\mid x,c=1)||\prob^{\prior}(y\mid x))).$$ 
We show the following equivalence.
\begin{theorem}
    Maximizing the original KTO objective (\ref{eq:kto original}) is equivalent to solving the following problem:
    \begin{align}
        &\max_\theta  \mathbb{E}_{\prob^{\data}}\left [ \prob^{\theta}(c|x,y)\right] \label{eq:kto thm}\\
        &\emph{s.t.} \quad \forall x,y: \prob^{\theta}(y\mid x,c=0)=\prob^{\prior}(y\mid x) \nonumber\\
        & \qquad \log\frac{\prob^\theta(c=0\mid x)}{\prob^\theta(c=1\mid x)} = \KL(\prob^{\theta}(y\mid x, c=1)|| \prob^{\prior}(y\mid x)). \nonumber
    \end{align}
    \label{thm:kto}
\end{theorem}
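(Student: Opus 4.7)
The plan is to mirror the derivation used for Theorem \ref{thm:dpo}: apply Bayes' rule to $\prob^\theta(c\mid x,y)$ and reduce the resulting expression using the two prior constraints until it matches the sigmoid terms inside the KTO objective \eqref{eq:kto original}. Because the KTO objective is already a per-sample function of $(x,y,c)$ whose expectation under $\prob^{\data}$ is taken, it suffices to show pointwise equality between the two per-sample expressions.

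First, by Bayes' rule I would write
$$\prob^\theta(c=1\mid x,y)=\frac{\prob^\theta(y\mid x,c=1)\,\prob^\theta(c=1\mid x)}{\prob^\theta(y\mid x,c=1)\,\prob^\theta(c=1\mid x)+\prob^\theta(y\mid x,c=0)\,\prob^\theta(c=0\mid x)}.$$
Substituting the first constraint $\prob^\theta(y\mid x,c=0)=\prob^{\prior}(y\mid x)$ and dividing numerator and denominator by $\prob^{\prior}(y\mid x)\,\prob^\theta(c=1\mid x)$ gives $\exp(h_\theta(x,y))/(\exp(h_\theta(x,y))+r_\theta(x))$, where $r_\theta(x):=\prob^\theta(c=0\mid x)/\prob^\theta(c=1\mid x)$ and $h_\theta$ is as defined in the theorem statement. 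The second constraint $\log r_\theta(x)=z(x)$ then collapses this to $\sigma(h_\theta(x,y)-z(x))$, and by the complementary identity $\prob^\theta(c=0\mid x,y)=\sigma(z(x)-h_\theta(x,y))$.

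Next, writing $\prob^\theta(c\mid x,y)=c\,\prob^\theta(c=1\mid x,y)+(1-c)\,\prob^\theta(c=0\mid x,y)$ for $c\in\{0,1\}$ and substituting the two sigmoid expressions derived above yields the integrand of \eqref{eq:kto original} exactly. Taking expectation under $\prob^{\data}$ then completes the equivalence between \eqref{eq:kto thm} and the original KTO objective.

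The main subtlety, rather than any algebraic obstacle, is the circular nature of the second constraint: $z(x)$ itself depends on $\theta$ through $\prob^\theta(y\mid x,c=1)$, so asking that $\log r_\theta(x)=z(x)$ couples $\prob^\theta(c\mid x)$ back to $\prob^\theta(y\mid x,c=1)$ and is not a straightforward closed-form prescription for $\prob^\theta(c\mid x)$. To make the equivalence faithful to the actual KTO algorithm I would state explicitly, as the paper itself notes after the theorem, that the $\theta$-dependence of $z$ is treated as a stop-gradient (or equivalently, $z(x)$ is read off from the current iterate and treated as a constant when evaluating the constraint). With this convention, the above pointwise identity between the two objectives at each $\theta$ is exact, which is the content of Theorem \ref{thm:kto}.
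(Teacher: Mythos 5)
Your proposal is correct and follows essentially the same route as the paper's proof: apply Bayes' rule to $\prob^\theta(c=1\mid x,y)$, substitute the negative-probability prior constraint, identify the class ratio $\prob^\theta(c=0\mid x)/\prob^\theta(c=1\mid x)$ with $e^{z(x)}$ via the second constraint, and observe that the resulting expression equals $\sigma(h_\theta(x,y)-z(x))$, matching the KTO integrand term by term. Your added remark about the stop-gradient treatment of $z(x)$ is consistent with the paper's own discussion following the theorem and does not change the argument.
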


\begin{proof}
    For the original KTO objective, we have:
    \begin{align*}
        &\sigma\left(h_\theta(x,y)-z(x)\right)=\sigma\left(\log\frac{\prob^{\theta}(y\mid x, c=1)}{\prob^{\prior}(y\mid x)e^{z(x)}}\right) = \frac{\prob^\theta(y\mid x, c=1)}{\prob^\theta(y\mid x, c=1) + \prob^{\prior}(y\mid x)e^{z(x)}},\\
        &\sigma\left(-h_\theta(x,y)+z(x)\right) = 1- \sigma\left(h_\theta(x,y)-z(x)\right).
    \end{align*}
    And for \eqref{eq:kto thm}, we have:
    \begin{align*}
        \prob^\theta(c=1\mid x,y) &= \frac{\prob^{\theta}(y\mid x,c=1)\prob^\theta(c=1\mid x)}{\prob^{\theta}(y\mid x,c=1)\prob^\theta(c=1\mid x)+\prob^{\prior}(y\mid x)\prob^\theta(c=0\mid x)} \\
        &=\frac{\prob^{\theta}(y\mid x,c=1)}{\prob^{\theta}(y\mid x,c=1)+\prob^{\prior}(y\mid x)\frac{\prob^\theta(c=0\mid x)}{\prob^\theta(c=1\mid x)}}.
    \end{align*}
    Hence the equivalence is straightforward.
\end{proof}


\subsubsection{Comparison with KTO}
\label{sec:compare kto}
\begin{algorithm}[h!]
    \caption{KTO \cite{ethayarajh2024kto}}
    \label{alg:kto}
    \begin{algorithmic}[1]
        \STATE {\bfseries Input:} data $\{(x, y,c)\}_{i=1}^N$ where $c\in\{0,1\}$, a fixed reference model $\prob^{\prior}$, trainable models $f^{\theta}_{}$ initialized with $p_0$.
        \FOR{every batch}
        \STATE Compute:
        $$F_\theta(x, y):=\sum_t\log\left(\frac{f^{\theta}_{}(y_t|x,y_{<t})}{p^{\prior}(y_t|x, y_{<t})}\right).$$
        \STATE Estimate $z(x):=\KL(f_{}^\theta(y\mid x)\|p^{\prior}(y\mid x))$. 
        \STATE Denote $\sigma(x):=\frac{1}{1+\exp(-x)}$. Minimize loss:
        $$L(x,y,c)=-c\sigma\left(F_\theta(x,y)-z(x)\right) - (1-c)\sigma\left(-F_\theta+z(x)\right).$$
        \ENDFOR
    \end{algorithmic}
\end{algorithm}

We present KTO in practice in Algorithm \ref{alg:kto} using our notation for better comparison. 
The prior assumption on  $\prob(c\mid x)$ does not seem to be natural  and removing it yields  PIPA-N. In terms of the algorithms, PIPA has the following key differences with KTO:
\begin{itemize}
    \item PIPA has an additional learnable head $g^{\theta}$ to capture $\prob^{\theta}(c_t|x,y_{<t})$. Unlike the language model head, which matches the vocabulary in its output dimension, $g^{\theta}$ has an output dimension of only 1. Consequently, the extra parameters amount to fewer than $1\%$ of the total, resulting in no overhead in both memory and speed.
    \item KTO does not extend to the step-level setting, whereas PIPA seamlessly accommodates both the answer-level and step-level settings within a single framework, supported by clear theoretical guidance.
    \item KTO needs to additionally estimate the KL term $c(x)$ by pairing random question and answers, which adds an extra step and slows its overall process. In contrast, even with its additional learnable head, PIPA remains faster in practice.
    \item KTO uses the SFT model for both fixed reference model and $f^{\theta}_{}$ initialization, and this is the only choice.
    PIPA framework allows arbitrary selection of the fixed prior model $p^{\prior}$. For simplicity, we can choose $p^{\prior}=f_0$ which is the same choice as KTO. But in PIPA, since the prior $p^{\prior}$ is unrelated to $f^{\theta}_{}$, we can also set $p^{\prior}$ to be the fine-tuned version of $f_0$ on both positive and negative data to get better estimation.
    \item Following DPO, KTO views the log ratio $\log(f^{\theta}_{}/\prob^{\prior})$ as rewards, which is directly maximized or minimized. In PIPA-M, the ratio $\log(f^{\theta}_{}g^{\theta}/\prob^{\prior})$ is the log likelihood, and we need to compute $\log(1-f^{\theta}_{}g^{\theta}/\prob^{\prior})$ for the negative steps, instead of things like $-\log(f^{\theta}_{}g^{\theta}/\prob^{\prior})$ as KTO and other works.
\end{itemize}

\section{Baseline Algorithms}
\label{sec:bsl algo}

\paragraph{Step-DPO}
The original Step-DPO \cite{Step-dpo} requires preference data generated from a tree structure and maximizes the standard DPO loss on the diverging nodes.  
Here, we propose a generalization of the algorithm that works with more generic paired data, without requiring a tree structure.  

We are given pairwise data $(x, y^+, y^-, c^+, c^-)$ with token-level representation, where $c^+$ consists entirely of ones, while $c^-$ contains a mix of ones and zeros. First, we define  
\[
r_t(x, y) := \log \frac{\prob^\theta(y_t \mid x, y_{<t}, c_t=1)}{\prob^{\prior}(y_t \mid x, y_{<t})}.
\]
Treating the sequences as a whole, the original DPO loss is given by  
\[
L_{\texttt{DPO}}(x, y^+, y^-, c^+, c^-) = -\log\sigma\left(\sum_{t} r_t(x, y^+) - \sum_{t} r_t(x, y^-)\right).
\]
Given that the positive steps in $y^-$ can negatively impact model performance if minimized, a straightforward approach when providing step-level annotations is to exclude these steps \citep[see e.g.,][]{anonymous2025maskdpo}, which yields the following loss function:
\begin{align*}
    L_0(x, y^+, y^-, c^+, c^-)=-\log\sigma\left(\sum_{t}r_t(x, y^+)-\sum_{t:c_t^-=0}r_t(x, y^-)\right),
\end{align*}
where we remove the positive steps in \( y^- \) from the \(\log\sigma()\) term. However, a potential issue arises when the number of steps varies, as the magnitude of the term inside \(\sigma()\) may differ, affecting the optimization due to the \(\sigma()\) function applied externally. To address this, we propose an intermediate solution between the original DPO and the above loss. Specifically, we apply the stop-gradient operation to the positive steps in \( y^- \) and get $L_1$:
\begin{align*}
    L_1(x, y^+, y^-, c^+, c^-)=-\log\sigma\left(\sum_{t}r_t(x, y^+)-\sum_{t:c_t^-=0}r_t(x, y^-)-\mathtt{sg}\left(\sum_{t:c_t^-=1}r_t(x, y^-)\right)\right),
\end{align*}
where $\mathtt{sg}(\cdot)$ denotes stop gradient.

In essence, $L_0$ masks the implicit reward of positive steps within a negative answer in the objective, while $L_1$ masks these positive steps only during the backpropagation step when computing gradients. The loss function $L_0$ corresponds to the approach introduced in \citet{anonymous2025maskdpo}, whereas $L_1$ is equivalent to the Step-DPO formulation when using tree-structured pairwise data. Our experiments adopt $L_1$, as it demonstrates better performance.

\paragraph{Step-KTO} 
Very recently, Step-KTO \cite{stepkto} introduced a loss function designed for data with step-level annotations. They partition the answer into groups corresponding to steps, where each $\sigma()$ contains only one group. For unpaired data $(x, y, c)$ represented at the token level, let $1 = s_1 < \cdots < s_K \leq T$ denote the starting tokens of all $K$ steps. Here, $c_t$ remains constant for $t \in [s_k, s_{k+1})$ for $1 \leq k \leq K$. The function $r_t(x,y)$ follows the same definition as in Step-DPO. The original KTO loss is  
\[
L_{\texttt{KTO}}(x,y,c) = c_T \sigma\left(\sum_{t} r_t(x,y) - z_0\right) + (1 - c_T) \sigma\left(-\sum_t r_t(x,y) + z_0\right).
\]  
The Step-KTO loss is given by  
\[
L_{\texttt{Step-KTO}}(x,y,c) = -\sum_{k=1}^K \left[ c_{s_k} \sigma\left(\sum_{s_k\leq t< s_{k+1}} r_t(x, y) - z_0\right) + (1 - c_{s_k}) \sigma\left(-\sum_{s_k\leq t< s_{k+1}} r_t(x,y) + z_0\right) \right].
\]  
However, our experiments revealed that Step-KTO loss does not improve performance. Inspired by the Step-DPO loss proposed earlier, we adopt the original KTO for positive answers while applying a different approach for negative answers by masking the gradient of positive steps:  
\[
L_1(x,y,c) = -\sigma\left(-\sum_{t: c_t=0} r_t(x,y) - \mathtt{sg}\left(\sum_{t:c_t=1} r_t(x,y)\right) + z_0\right).
\]  
The key idea is to retain the forward pass of all steps in $\sigma()$ for normalization while excluding positive steps in the backward pass to prevent their probabilities from being minimized.

\end{document}